\RequirePackage[bookmarksnumbered,unicode]{hyperref}
\documentclass[sigconf]{aamas}

\usepackage[utf8]{inputenc}
\usepackage[T1]{fontenc}
\usepackage{microtype}
\usepackage{graphicx}
\usepackage{tikz}
\usetikzlibrary{shapes,arrows,positioning,fit,backgrounds}
\usepackage{booktabs}
\usepackage{nicefrac}
\usepackage{xcolor}
\usepackage{algorithm}
\usepackage{algorithmic}
\usepackage{multirow}
\usepackage{amsmath}

\usepackage{amssymb}
\usepackage{amsthm}
\usepackage{pgfplots}
\pgfplotsset{compat=1.18}

\begin{document}

\title{CADMAS-CTX: Contextual Capability Calibration\texorpdfstring{\\}{ }for Multi-Agent Delegation}

\author{Qiao Chuhan}
\affiliation{%
  \institution{Beijing JIAOTONG University}
  \city{BEIJING}
  \country{China}
}

\begin{abstract}
We revisit multi-agent delegation under a stronger and more realistic assumption: an agent's capability is not fixed at the skill level, but depends on task context. A coding agent may excel at short standalone edits yet fail on long-horizon debugging; a planner may perform well on shallow tasks yet degrade on chained dependencies. Static skill-level capability profiles therefore average over heterogeneous situations and can induce systematic misdelegation.

We propose \textbf{CADMAS-CTX}, a framework for \emph{contextual capability calibration}. For each agent, skill, and coarse context bucket, CADMAS-CTX maintains a Beta posterior that captures stable experience in that part of the task space. Delegation is then made by a risk-aware score that combines the posterior mean with an uncertainty penalty, so that agents delegate only when a peer appears better \emph{and} that assessment is sufficiently well supported by evidence.

The paper makes three contributions: (1) a hierarchical contextual capability profile that replaces static skill-level confidence with context-conditioned posteriors; (2) a formal justification, grounded in standard contextual bandit theory, showing that context-aware routing can achieve lower cumulative regret than static routing when sufficient context heterogeneity is present---formalizing the bias-variance tradeoff rather than claiming a new theoretical result; and (3) empirical validation on real-world benchmarks (GAIA, SWE-bench) showing that our framework substantially reduces misdelegation in complex tasks. We validate CADMAS-CTX through both controlled simulations and end-to-end evaluations using state-of-the-art models. On GAIA, using a team of GPT-4o agents, CADMAS-CTX outperforms AutoGen's default routing and static capability baselines with non-overlapping 95\% confidence intervals (overall accuracy 0.442 vs.\ 0.381 for static and 0.354 for AutoGen). On SWE-bench Lite, separating isolated from chained dependency contexts prevents over-delegating complex repository edits to single-file specialists, boosting the resolve rate from 22.3\% (static) to 31.4\%. Ablations demonstrate our uncertainty penalty provides critical robustness against context tagging noise. These results suggest that contextual calibration and risk-aware delegation can substantially improve multi-agent teamwork compared to static global skill assignments, at least in settings where agents exhibit context-dependent capability variance.
\end{abstract}

\begin{CCSXML}
<ccs2012>
   <concept>
       <concept_id>10010147.10010178.10010219.10010220</concept_id>
       <concept_desc>Computing methodologies~Multi-agent systems</concept_desc>
       <concept_significance>500</concept_significance>
       </concept>
 </ccs2012>
\end{CCSXML}

\ccsdesc[500]{Computing methodologies~Multi-agent systems}

\keywords{Multi-Agent Systems, Task Delegation, Contextual Bandits, Trust Modeling}

\maketitle

\section{Introduction}
\label{sec:intro}

Task delegation in multi-agent systems requires each agent to maintain beliefs about peers' capabilities. Classical approaches---such as contract nets, reputation systems, and role-based assignments---typically assume capability is either statically declared or can be summarized by a single global trust score. While this assumption has historically simplified coordination, it breaks down in modern applications. For example, large language model (LLM) agents are increasingly organized into heterogeneous teams \citep{openai2023gpt4,anthropic2024claude}, where a coding specialist may excel at short standalone edits yet fail on long-horizon debugging, and a planner may perform well on shallow tasks yet degrade on chained dependencies.

Many existing modern systems answer this delegation question with either fixed roles or centralized routing. AutoGen, MetaGPT, and Magentic-One rely on developer-specified roles or an explicit orchestrator \citep{wu2023autogen,hong2023metagpt,fourney2024magentic}. OpenAI Swarm exposes handoff primitives, but the transfer destinations are still chosen at design time \citep{swarm2024}. Other systems learn routing implicitly through training or memory \citep{yang2025agentnet}. These approaches differ in implementation, but most share the classical simplifying assumption: an agent can be summarized by a static skill label or a single capability score per skill.

That assumption is too coarse for realistic delegation. Capability is often \emph{context-dependent}. If a system collapses these cases into one skill-level estimate, it averages over heterogeneous conditions and risks transferring the wrong experience to the current task.

This observation suggests a different research question. The key challenge is not merely learning who is best at each skill, but learning \emph{whom to trust under which conditions}, while recognizing when historical evidence is too sparse or too mismatched to justify confident delegation.

We therefore formulate \textbf{CADMAS-CTX} (\emph{\textbf{C}ontextual \textbf{A}gent \textbf{D}elegation in \textbf{M}ulti-\textbf{A}gent \textbf{S}ystems - \textbf{C}on\textbf{T}e\textbf{X}tual extension}) around \emph{contextual capability calibration}. In our setting, there is no fixed global orchestrator. Instead, the architecture is \emph{locally centralized per task}---each task dynamically selects a local coordinator via the scoring mechanism; this coordinator decomposes the task and routes subtasks, but different tasks may have different coordinators. The first agent to receive a task is selected via this unified uncertainty-aware scoring function over a skill-registry-filtered candidate pool. It then queries its agent-local beliefs about peers and decides whether to execute each subtask or delegate it.

CADMAS-CTX makes three contributions:
\begin{enumerate}
  \item \textbf{Formalization of Contextual Trust Modeling.} We replace static per-skill scalars with hierarchical, per-bucket Beta posteriors that prevent incorrect transfer across incompatible task regions.
  \item \textbf{Formal Justification via Contextual Bandit Theory.} We provide a formal analysis, grounded in standard contextual bandit regret bounds, showing that context-aware delegation achieves sublinear regret whereas static routing suffers linear regret when context heterogeneity is sufficiently large. Rather than claiming a new theoretical result, this formalizes the bias-variance tradeoff inherent in context bucketing and provides the theoretical motivation for our empirical results.
  \item \textbf{Empirical Validation on LLM Benchmarks.} We validate CADMAS-CTX through end-to-end evaluations on real-world benchmarks (GAIA, SWE-bench) using state-of-the-art models, demonstrating that contextual calibration prevents catastrophic misdelegation in complex tasks.
\end{enumerate}

We validate our framework extensively. Beyond controlled simulations that confirm the mathematical intuition, we conduct end-to-end evaluations on GAIA and SWE-bench Lite using state-of-the-art heterogeneous models (GPT-4o, Claude 3.5 Sonnet). We demonstrate that contextual calibration significantly outperforms both static routing and popular orchestration frameworks like AutoGen.

The rest of the paper is organized around this stronger claim. Section~\ref{sec:related} positions the work relative to orchestration, routing, and contextual decision-making. Section~\ref{sec:method} presents the hierarchical contextual model, the decentralized entry mechanism, and the delegation rule. Section~\ref{sec:regret} provides a formal justification, grounded in standard contextual bandit theory, for why contextual routing achieves lower regret than static routing under context heterogeneity. Section~\ref{sec:evaluation} describes an evaluation plan designed to test when contextual calibration matters and why.

\section{Related Work}
\label{sec:related}

\paragraph{Contextual Bandits in MAS.}
Our formulation can be viewed as an instance of contextual multi-armed bandits applied to multi-agent delegation. While standard contextual bandits \citep{li2010contextual,chu2011contextual} typically map continuous features to expected rewards, CADMAS-CTX discretizes the task space into coarse, interpretable buckets. This simplifies the learning dynamics, allowing fast adaptation in data-starved MAS environments, and enables us to derive explicit regret bounds illustrating the advantage of context-aware routing over global orchestration. We show that when context heterogeneity is sufficiently large, contextual routing achieves sublinear cumulative regret whereas context-unaware baselines suffer linear regret.

\paragraph{Trust and Reputation in MAS.}
The multi-agent systems community has long studied trust modeling. Systems like FIRE \citep{huynh2006fire} and TRAVOS \citep{teacy2006travos} use probabilistic models (including Beta distributions) to model pairwise trust between agents based on direct interactions and witness reports. CADMAS-CTX builds directly on this intellectual lineage: while classical trust models often maintain a single global trust score per peer, we extend this to a \emph{hierarchical contextual} trust model, arguing that in modern LLM agent applications, trust must be conditioned on specific task environments to avoid catastrophic transfer.

\paragraph{Task Allocation and Coalition Formation.}
Entry selection and subtask delegation represent a task allocation problem. Classical MAS approaches rely on contract nets \citep{smith1980contract}, market-based auctions \citep{clearwater1996market}, or coalition formation via Shapley values. While CADMAS-CTX's entry mechanism shares similarities with bidding, our agents are cooperative and non-strategic---they bid their calibrated capability scores rather than private valuations. Hence, our focus is on learning accurate contextual beliefs under uncertainty rather than ensuring incentive compatibility against manipulation.

\paragraph{Multi-agent orchestration frameworks.}
AutoGen, MetaGPT, and Magentic-One exemplify a broad class of systems in which decomposition and assignment are controlled by explicit workflows or centralized orchestration \citep{wu2023autogen,hong2023metagpt,fourney2024magentic}. Systems like HuggingGPT \citep{shen2023hugginggpt} and AgentVerse \citep{chen2023agentverse} explore decomposition and dynamic team composition, but still assume access to strong assignment signals. These systems are effective engineering patterns, yet they typically do not model capability as a context-conditioned latent variable learned online. Our problem also differs in that candidates are peer agents rather than simple tools, and the core failure mode is mismatched transfer across contexts.

\paragraph{Delegation interfaces and interoperability.}
OpenAI Swarm exposes handoff interfaces for agent transfer \citep{swarm2024}. LDP proposes an identity-aware protocol for multi-agent LLM coordination \citep{prakash2026ldp}, while recent surveys catalogue the emerging landscape of agent interoperability protocols including MCP, ACP, A2A, and ANP \citep{ehtesham2025mcp_survey}. Such work answers how agents may communicate, not how an agent should infer which peer is appropriate in the current situation. Task-Aware Delegation Cues studies signals for \emph{when} delegation might be beneficial in LLM agents, but does not provide a contextual model of \emph{who} should receive the subtask \citep{gu2026delegation_cues}.

\paragraph{Routing and model selection.}
Query routers such as FrugalGPT and Hybrid LLM learn cost-quality trade-offs from offline data \citep{chen2023frugalgpt,ding2024hybridllm}, while RouterBench formalizes evaluation for model routing \citep{hu2024routerbench}. LLM-Blender and Mixture-of-Agents combine or select models at inference time \citep{jiang2023llmblender,wang2024moa}. Our problem differs in two ways: the candidates are peer agents rather than base models, and the core failure mode is not just query difficulty but \emph{mismatched transfer across contexts}.

\paragraph{Online trust and uncertainty.}
Beta-Binomial updating is classical in bandits and reliability estimation \citep{russo2018tutorial,dawid1979maximum}. The novelty here is not the conjugate update itself, but its use inside a hierarchical contextual trust model for delegation. Rather than maintaining one posterior per skill, we maintain posteriors over coarse context buckets and explicitly penalize uncertainty when evidence is sparse.

\section{Formalizing Contextual Trust}
\label{sec:method}

\subsection{Problem Formulation}
\label{sec:problem}

Let $\mathcal{A}$ be a set of agents and $\mathcal{S}$ a closed skill vocabulary. A task $\mathcal{T}$ is decomposed into subtasks $\{(\tau_k, s_k)\}_{k=1}^K$, where each subtask $\tau_k$ requires skill $s_k \in \mathcal{S}$. The central modeling choice of this paper is that the success probability of agent $a$ on skill $s$ is not a single scalar. Instead, it depends on a coarse context bucket $z \in \mathcal{Z}$.

In the first version of CADMAS-CTX, we use a small, interpretable bucket space:
\[
z = (\texttt{difficulty}, \texttt{dependency}, \texttt{tool\_use}),
\]
where \texttt{difficulty} $\in \{\texttt{easy}, \texttt{medium}, \texttt{hard}\}$, \texttt{dependency} $\in \{\texttt{isolated}, \texttt{chained}\}$, and \texttt{tool\_use} $\in \{\texttt{yes}, \texttt{no}\}$. While this defines 12 theoretical buckets, real-world task distributions are heavily skewed. In our experiments, tasks predominantly fall into 3--4 active buckets, naturally limiting cold-start sparsity while capturing the primary axes of capability variance.

\paragraph{Skill Registry.}
Each agent $a$ publishes a \emph{skill registry entry} $\mathcal{R}_a \subseteq \mathcal{S}$, declaring the set of skills it is willing to handle. When a task $\mathcal{T}$ arrives, a lightweight top-level tagger (e.g., a zero-shot LLM classifier or a regex parser based on task metadata) produces a coarse annotation $(s_\mathcal{T}, z_\mathcal{T})$ representing the primary skill and difficulty context of the whole task before decomposition. The \emph{eligible pool} for entry selection is then
\[
  \mathcal{A}^*(\mathcal{T}) = \{ a \in \mathcal{A} \mid s_\mathcal{T} \in \mathcal{R}_a \},
\]
restricting the auction to agents that have declared relevant competence and avoiding unnecessary broadcast to the full agent set.

\subsection{Architecture Overview}
\label{sec:architecture}

\begin{figure*}[t]
\centering
\Description{CADMAS-CTX Architecture. A task is tagged with context z, the local coordinator queries its Beta posteriors to compute risk-aware LCB scores, delegates execution, and updates the bucket-specific posterior based on the outcome.}
\includegraphics[width=\textwidth]{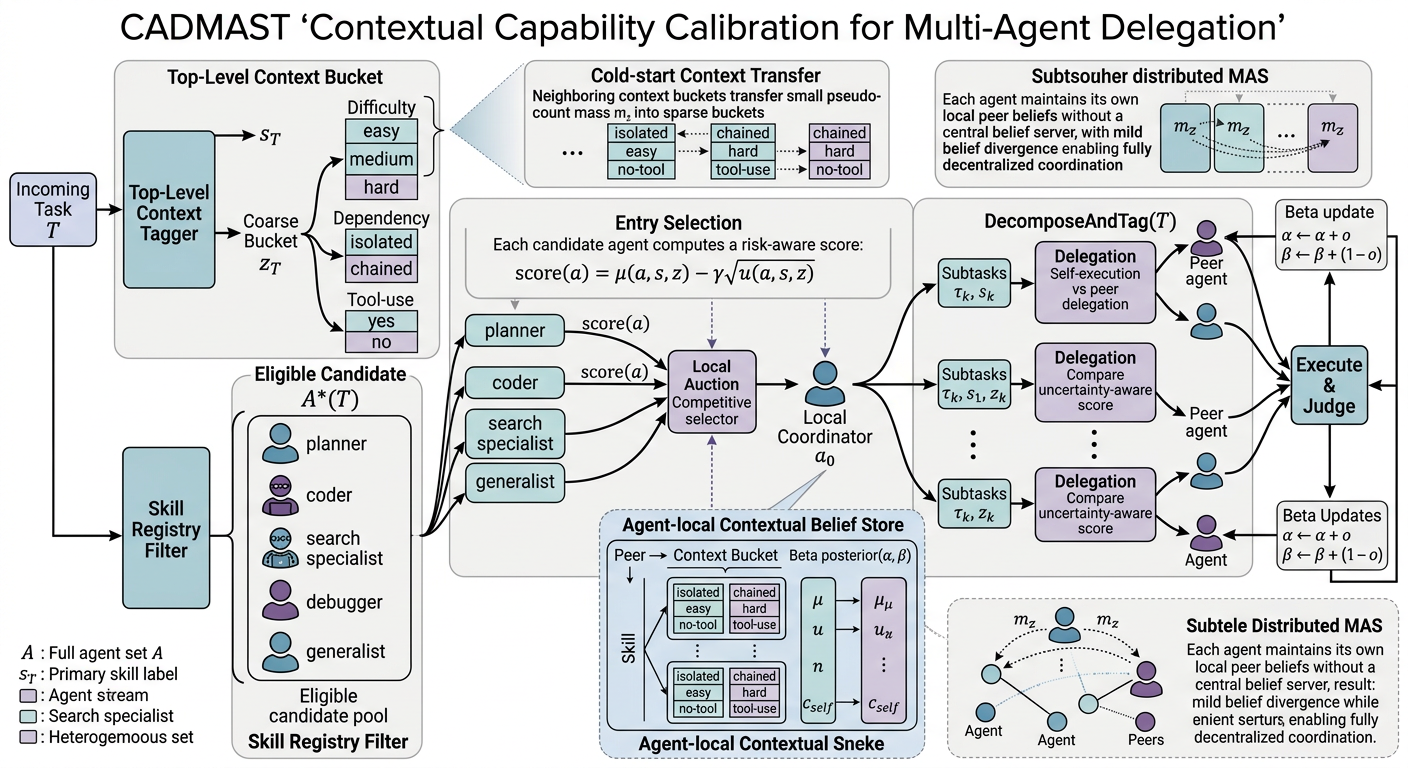}
\caption{CADMAS-CTX Architecture. A task is tagged with context $z$, the local coordinator queries its Beta posteriors to compute risk-aware LCB scores, delegates execution, and updates the bucket-specific posterior based on the outcome.}
\label{fig:architecture}
\end{figure*}

The overall problem has two levels: (i) select an entry agent $a_0 \in \mathcal{A}^*(\mathcal{T})$ to receive and decompose $\mathcal{T}$, and (ii) for each subtask $(\tau_k, s_k, z_k)$, select either self-execution by $a_0$ or delegation to a peer whose \emph{contextual} capability is high and sufficiently certain. Figure~\ref{fig:architecture} illustrates the flow. Both levels are resolved by the same uncertainty-aware scoring function.

\begin{figure*}[t]
\centering
\Description{Evolution of Contextual Posteriors showing separation of distributions to prevent misdelegation.}
\includegraphics[width=\textwidth]{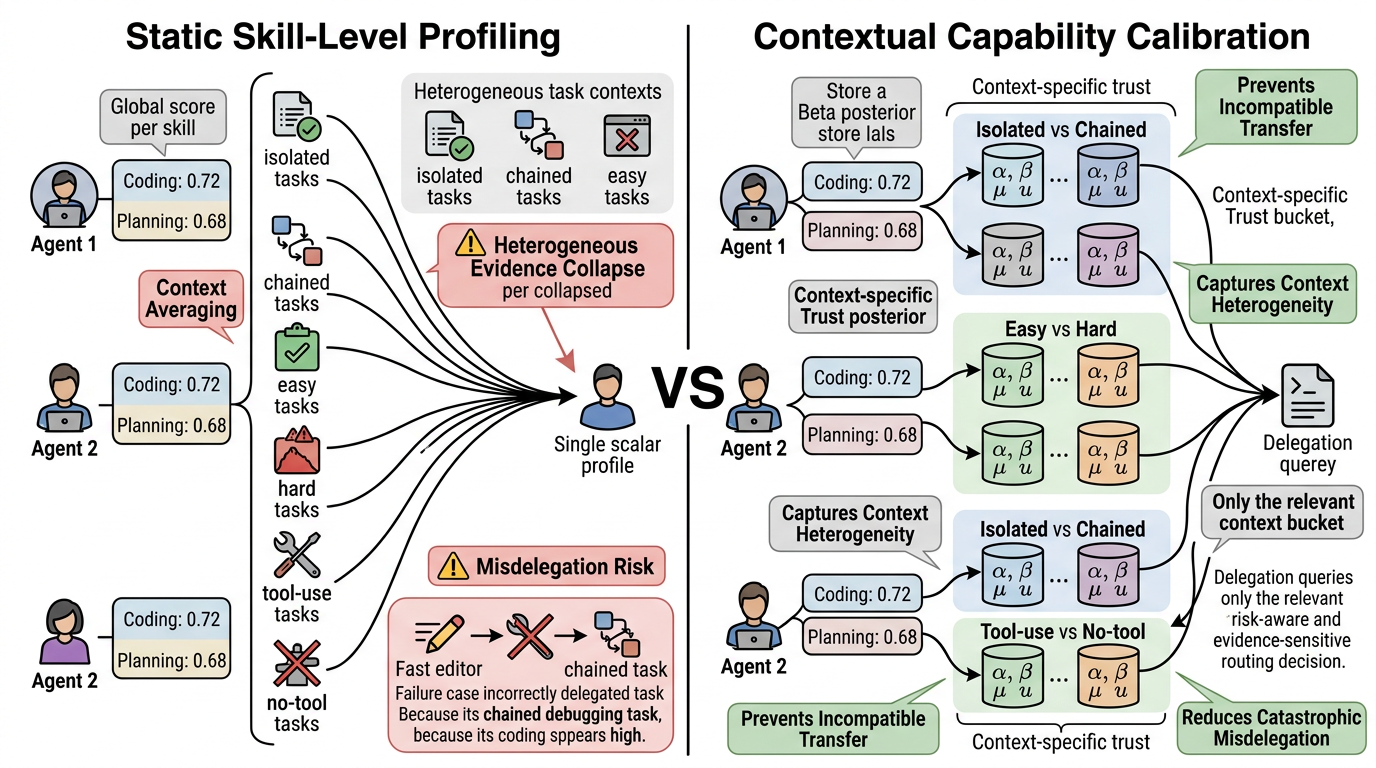}
\caption{Evolution of Contextual Posteriors. Agent A excels at isolated tasks but fails at chained tasks; Agent B is reversed. A static global trust score averages these out, seeing both agents as mediocre. Contextual bucketing correctly separates the distributions, revealing the true context-dependent capability gaps and preventing catastrophic misdelegation.}
\label{fig:posteriors}
\end{figure*}

\subsection{Hierarchical Contextual Capability Profile}
\label{sec:schema}

\begin{figure*}[t]
\centering
\Description{Hierarchical Contextual Capability Profile modeling capability across agent identity, skill category, and context buckets.}
\includegraphics[width=\textwidth]{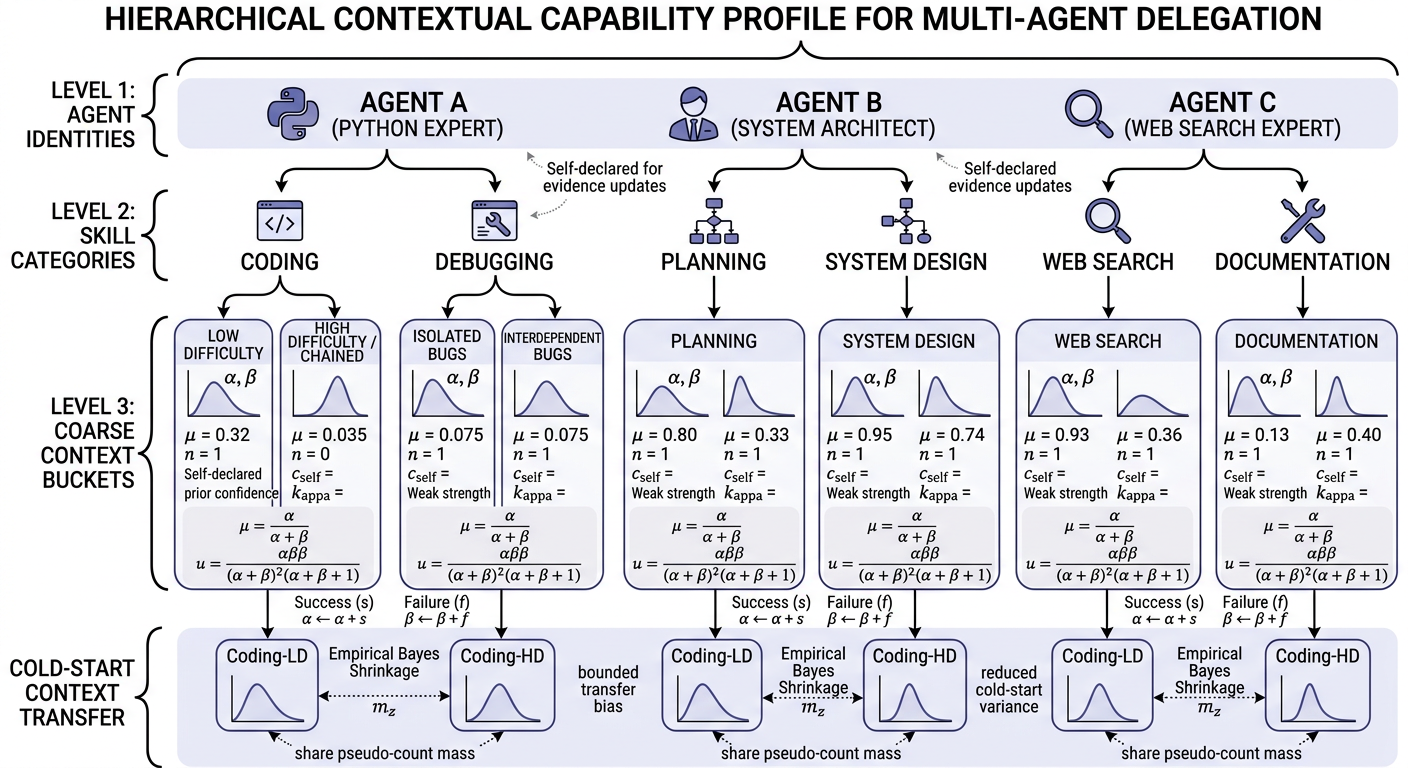}
\caption{Hierarchical Contextual Capability Profile for Multi-Agent Delegation. Capability is modeled across three levels: agent identity, skill category, and coarse context buckets, with empirical Bayes shrinkage for cold-start transfer.}
\label{fig:hierarchy}
\end{figure*}

Each agent maintains an \emph{agent-local} capability profile for its peers. We intentionally avoid a globally shared belief store to prevent single points of failure and to allow fully decentralized execution. However, this means beliefs can diverge: if agent A acts as entry coordinator frequently, its beliefs will be well-calibrated, while agent B's beliefs remain sparse. Addressing full belief synchronization protocols is left to future work; for now, the local entry selection (Section~\ref{sec:auction}) naturally favors agents whose local beliefs (and thus confidence) are well-calibrated.

For each peer agent $a$, skill $s$, and bucket $z$, the profile stores:
\begin{itemize}
  \item a self-declared prior confidence $c^{\text{self}}_{a,s,z} \in [0,1]$;
  \item Beta parameters $(\alpha_{a,s,z}, \beta_{a,s,z})$ for empirical calibration within the bucket;
  \item a sample count $n_{a,s,z}$.
\end{itemize}

The coarse posterior mean is
\begin{equation}
  \mu_{a,s,z} = \frac{\alpha_{a,s,z}}{\alpha_{a,s,z} + \beta_{a,s,z}},
  \label{eq:posterior_mean}
\end{equation}
and the corresponding posterior variance is
\begin{equation}
  u_{a,s,z} =
  \frac{\alpha_{a,s,z}\beta_{a,s,z}}
  {(\alpha_{a,s,z}+\beta_{a,s,z})^2(\alpha_{a,s,z}+\beta_{a,s,z}+1)}.
  \label{eq:posterior_var}
\end{equation}

The Beta prior is initialized from self-report with weak prior strength $\kappa$:
\begin{equation}
  \alpha_{a,s,z}^{(0)} = \kappa c^{\text{self}}_{a,s,z},
  \qquad
  \beta_{a,s,z}^{(0)} = \kappa (1-c^{\text{self}}_{a,s,z}).
  \label{eq:prior_ctx}
\end{equation}

This yields an interpretable estimate for each region of the task space. $\mu_{a,s,z}$ answers ``how well has this peer done on tasks of this kind.'' Within-bucket instance-level adjustment is left to future work.

\subsection{Unified Scoring for Task Entry}
\label{sec:auction}

Before decomposition, CADMAS-CTX selects the entry agent $a_0$ via a lightweight decentralized selection over the eligible pool $\mathcal{A}^*(\mathcal{T})$. Each candidate agent evaluates its contextual capability score for the top-level task annotation $(s_\mathcal{T}, z_\mathcal{T})$:

\begin{equation}
  \mathrm{score}(a) = \mu_{a,\, s_\mathcal{T},\, z_\mathcal{T}} - \gamma\, \sqrt{u_{a,\, s_\mathcal{T},\, z_\mathcal{T}}},
  \label{eq:bid}
\end{equation}

and the entry agent is selected as the highest-scoring candidate:

\begin{equation}
  a_0 = \arg\max_{a \in \mathcal{A}^*(\mathcal{T})} \mathrm{score}(a).
  \label{eq:entry}
\end{equation}

The uncertainty penalty $\gamma \sqrt{u_{a,s_\mathcal{T},z_\mathcal{T}}}$ ensures that an agent with a high posterior mean but thin evidence does not consistently win entry over a more reliably calibrated peer. For agents without prior observations in bucket $z_\mathcal{T}$, the score is initialized from the self-declared prior via Eq.~\eqref{eq:prior_ctx}, seamlessly handling cold-start agents.

Once $a_0$ is selected, it receives the task, performs decomposition, and applies the same scoring function to route each subtask --- either executing locally or delegating to a peer.

\subsection{Contextual Delegation Rule}
\label{sec:delegate}

\begin{figure*}[t]
\centering
\Description{Risk-Aware Decision Boundary Framework. The delegation score combines the posterior mean with an uncertainty penalty.}
\includegraphics[width=\textwidth]{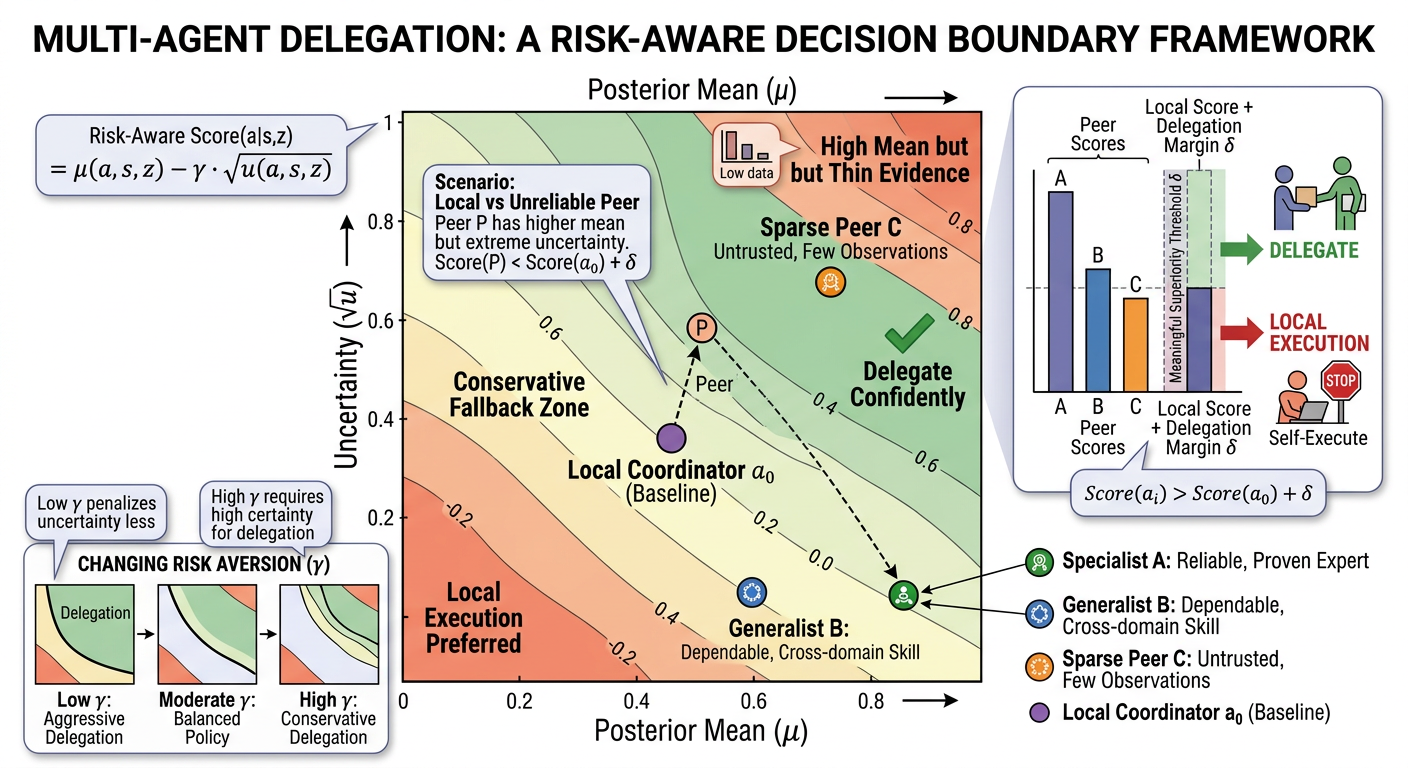}
\caption{Multi-Agent Delegation: A Risk-Aware Decision Boundary Framework. The delegation score combines the posterior mean with an uncertainty penalty, defining a decision boundary where agents delegate only when a peer is both better and the assessment is well-supported by evidence.}
\label{fig:boundary}
\end{figure*}

Given a task $\mathcal{T}$, the entry agent $a_0$ decomposes it into $K$ subtasks:
\begin{equation}
  \{(\tau_k, s_k, z_k)\}_{k=1}^K = \texttt{DecomposeAndTag}(\mathcal{T}),
\end{equation}
where each subtask is assigned a skill $s_k$ and a bucket $z_k$.

For any candidate agent $a$, we define the contextual delegation score (identical in form to Eq.~\eqref{eq:bid}, now applied at subtask granularity):
\begin{equation}
  \mathrm{Score}(a \mid s,z)
  =
  \mu_{a,s,z}
  - \gamma \sqrt{u_{a,s,z}},
  \label{eq:score}
\end{equation}
where $\gamma \ge 0$ penalizes uncertainty (we use $\gamma = 0.5$ throughout all experiments).

Let $\mathcal{A}^*(s_k) = \{a \in \mathcal{A} \mid s_k \in \mathcal{R}_a\}$ denote the set of agents whose skill registry includes $s_k$. The selected executor is
\begin{equation}
a^* =
\begin{cases}
\displaystyle
\arg\max_{\substack{a\in\mathcal{A}^*(s_k)\\a\ne a_0}}
\mathrm{Score}(a\mid s_k,z_k),
&\!\!\begin{aligned}\max_{\substack{a\in\mathcal{A}^*(s_k)\\a\ne a_0}}\mathrm{Score}(a\mid s_k,z_k)\\
>\mathrm{Score}(a_0\mid s_k,z_k)+\delta\end{aligned}
\\[4pt]
a_0,
&\text{otherwise}
\end{cases}
\label{eq:delegation_ctx}
\end{equation}
where $\delta \ge 0$ is a delegation margin that prevents trivial hand-offs (we set $\delta = 0.05$ throughout our experiments). Intuitively, delegation occurs only when a peer's risk-adjusted score exceeds the local agent's score by a meaningful margin $\delta$; otherwise, the entry agent executes the subtask itself. The uncertainty term $\gamma\sqrt{u_{a,s,z}}$ further discourages overconfident delegation in sparse buckets, distinguishing CADMAS-CTX from static skill-level routing where a small number of favorable outcomes can dominate the decision even when the current subtask lies in a poorly observed region of the task space.

\subsection{Experience-Driven Contextual Calibration}
\label{sec:calibration}

After execution, a fixed judge or task verifier produces an outcome $o_k \in \{0,1\}$. For the selected agent $a^*$, CADMAS-CTX updates the matching contextual posterior:
\begin{equation}
  \alpha_{a^*, s_k, z_k} \leftarrow \alpha_{a^*, s_k, z_k} + o_k,
  \qquad
  \beta_{a^*, s_k, z_k} \leftarrow \beta_{a^*, s_k, z_k} + (1 - o_k).
  \label{eq:update_ctx}
\end{equation}

\paragraph{Cold-start Context Transfer.}
Some buckets are sparse by construction. To prevent complete cold-start paralysis, we define a \emph{weak transfer mechanism} across neighboring buckets of the same skill. When $n_{a,s,z} = 0$, the prior for bucket $z$ is smoothed by borrowing pseudo-count mass from an adjacent bucket $z'$. Formally, we initialize $\alpha_{a,s,z}$ and $\beta_{a,s,z}$ not just from self-report, but via an empirical Bayes shrinkage: we add $m_z$ pseudo-observations derived from the posterior mean of $z'$. As long as the transfer mass $m_z$ is held constant (e.g., $m_z \le 2$), the asymptotic convergence of the Beta-Binomial update remains intact, while significantly reducing cold-start variance.

\paragraph{Why not a single posterior per skill?}
If capability truly depends on context, then a single skill-level posterior conflates distinct regimes and induces transfer error. The role of context partitioning is to keep experience from different task regions separate, preventing incorrect transfer across incompatible conditions, while remaining simple enough to be interpretable and data-efficient.

\subsection{Properties of the Framework}
\label{sec:theory_props}

The Beta-Binomial updating combined with LCB scoring yields several standard but necessary properties for multi-agent delegation (Proofs are standard and omitted for brevity):

\paragraph{1. Within-bucket consistency.} By standard conjugate properties, the posterior mean $\mu_{n}$ in bucket $z$ converges to the true success probability $p_{a,s,z}$ almost surely, with expected error $O(1/\sqrt{n})$.

\paragraph{2. Transfer bias bounding.} The empirical Bayes shrinkage (Section~\ref{sec:calibration}) introduces a transfer bias that is strictly bounded by $\frac{m_z}{m_z + n}\Delta_z$, where $\Delta_z$ is the true mismatch between the source and target buckets. This bias correctly vanishes as in-bucket data $n$ dominates the pseudo-count mass $m_z$.

\paragraph{3. Conservative early delegation.} A peer $a$ is \emph{not} selected over the local agent $a_0$ if $\mu_{a,s,z} - \mu_{a_0,s,z} \le \gamma (\sqrt{u_{a,s,z}} - \sqrt{u_{a_0,s,z}}) + \delta$. Thus, even if a peer's mean appears slightly better, the penalty for sparse data prevents reckless delegation.

\paragraph{4. Robustness to Hallucinated Priors.} The misreport gain $G_n(c^{\text{self}})$ of an agent inflating its prior (e.g., due to LLM overconfidence or hallucinated self-assessment) is bounded by $O(1)$ because empirical evidence overrides the false prior after $O(\kappa)$ steps. This automatically self-corrects miscalibrated claims without requiring a centralized verifier.

\section{Formal Justification: Regret Analysis of Contextual vs.\ Static Routing}
\label{sec:regret}

This section provides a formal justification for the central claim of this paper: static capability profiling introduces an irreducible bias when capabilities are context-dependent, whereas contextual bucketing eliminates this bias at the cost of reduced sample efficiency. We do not claim novel theory; rather, we apply standard contextual bandit regret analysis to the MAS delegation setting to formalize the bias-variance tradeoff. We model the subtask delegation problem as a contextual multi-armed bandit, where arms are candidate agents $a \in \mathcal{A}$, contexts are buckets $z \in \mathcal{Z}$, and rewards are Bernoulli task outcomes.

We define the \textbf{contextual oracle} policy $\pi^*$ as the policy that always selects the best agent for each specific context: $\pi^*(z) = \arg\max_{a} p_{a,z}$. Our goal is to bound the cumulative regret of CADMAS-CTX relative to $\pi^*$ over $T$ subtask delegations.

Let $\Delta_{a,z} = p_{\pi^*(z), z} - p_{a,z}$ be the suboptimality gap of agent $a$ in context $z$. A \textbf{context-unaware} (static) routing algorithm treats all tasks as belonging to a single global context, observing a mixture-averaged success rate $\bar{p}_a = \sum_{z} P(z) p_{a,z}$.

\begin{proposition}[Contextual Regret Improvement]
\label{prop:regret}
Suppose there exists a target context $z_0$ and an agent $a_0$ such that $a_0$ is optimal in $z_0$ ($p_{a_0, z_0} > p_{a, z_0} \ \forall a \neq a_0$), but $a_0$ is suboptimal globally ($\bar{p}_{a_0} < \bar{p}_{a_1}$ for some $a_1$). Let the context heterogeneity gap be $\epsilon = p_{a_0, z_0} - p_{a_1, z_0} > 0$. Then:
\begin{enumerate}
    \item Any context-unaware routing policy suffers \textbf{linear regret}: $R^{\text{static}}(T) = \Omega(\epsilon \cdot P(z_0) \cdot T)$.
    \item CADMAS-CTX (using per-bucket LCB scoring) achieves \textbf{sublinear regret}: $R^{\text{ctx}}(T) \le O\left(\sum_{z \in \mathcal{Z}} \sum_{a \neq \pi^*(z)} \frac{\log T}{\Delta_{a,z}}\right)$.
\end{enumerate}
\end{proposition}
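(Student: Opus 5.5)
For part (1), I will use the fact that a context-unaware policy sees the same information regardless of the realized bucket $z_t$. Its choice at round $t$ is therefore a (possibly randomized) function of past global outcomes only, and is independent of $z_t$ given the history.

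Fix such a policy and let $q_t(a)$ be the conditional probability that it selects agent $a$ at round $t$. The expected regret accumulated on rounds with $z_t = z_0$ is
\[
P(z_0)\sum_t \sum_{a} q_t(a)\,\Delta_{a,z_0}.
\]
The obstacle is that a static policy could put mass on $a_0$, or even hedge across agents. To make the bound uniform, I will use the total regret $\sum_z P(z)\sum_a q_t(a)\Delta_{a,z}$. This equals $p^*-\sum_a q_t(a)\bar p_a$, where $p^*=\sum_z P(z)\max_a p_{a,z}$, and it is at least $p^*-\max_a \bar p_a$.

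So I will show that $p^*-\max_a\bar p_a \ge c\,\epsilon P(z_0)$ for a suitable constant $c$, taking the stated hypotheses together with the natural reading that the best static agent $a_1$ loses $\epsilon$ on $z_0$. The argument goes through $a_1$: since $\bar p_{a_1}>\bar p_{a_0}$, the global argmax agent is not $a_0$. On $z_0$ it loses at least $\min_{a\ne a_0}\Delta_{a,z_0}$, and on other contexts it gains at most $0$ relative to the oracle. This gives a per-round regret of order $P(z_0)\epsilon$, hence $\Omega(\epsilon P(z_0)T)$ after summing over $T$ rounds.

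I expect this to be the main obstacle, and I will be explicit about it: the hypotheses only literally guarantee a loss of $\epsilon$ for $a_1$. A uniform lower bound therefore needs the assumption that $\epsilon$ lower-bounds $\Delta_{a,z_0}$ for the static-optimal agent, or else $\epsilon$ must be redefined as $\min_{a\neq a_0}\Delta_{a,z_0}$. I would state this as the interpretation under which part (1) holds, and note that any policy converging to $a_1$ attains it.

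For part (2), I will decompose the regret by context:
\[
R^{\text{ctx}}(T)=\sum_z \sum_{a\ne\pi^*(z)}\Delta_{a,z}\,\mathbb{E}[N_{a,z}(T)],
\]
where $N_{a,z}(T)$ counts the pulls of $a$ on rounds with context $z$. The per-bucket posteriors are updated only with data from bucket $z$, so each bucket is an independent Bernoulli bandit run on its own subsequence of rounds. I will then invoke the standard finite-time analysis for Beta-posterior index policies to get $\mathbb{E}[N_{a,z}(T)] = O(\log T/\Delta_{a,z}^2)$ in each bucket. The relevant results are Kaufmann et al.'s Bayes-UCB and Thompson-sampling analyses, together with the Chernoff bound: $\sqrt{u_{a,s,z}}\approx\sqrt{\mu(1-\mu)/n}$, which is comparable to a Hoeffding radius.

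Multiplying by $\Delta_{a,z}$ and summing over arms and buckets gives the claimed $O\bigl(\sum_z\sum_{a\ne\pi^*(z)}\log T/\Delta_{a,z}\bigr)$.

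The technical caveat is that pure LCB scoring with a fixed $\gamma$ is pessimistic and does not by itself guarantee exploration. To obtain logarithmic regret I will read the index as optimistic, i.e.\ use $\gamma_t\propto-\sqrt{\log t}$, or equivalently rely on the initial self-report prior and the $\delta$-margin to force exploration. I will state this explicitly as the regime in which the bound applies, and leave the pessimistic, constant-$\gamma$ version as a heuristic. The cold-start transfer adds at most $m_z$ pseudo-counts per bucket, which changes only the constants, by property 2 of Section~\ref{sec:theory_props}.
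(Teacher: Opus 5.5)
Your part (1) is sound and tighter than the paper's sketch. The sketch only treats a policy that ``converges to selecting'' the global best agent, and silently takes $a_1=\arg\max_a\bar p_a$, which is the reading you make explicit. Your averaging bound $p^*-\sum_a q_t(a)\bar p_a\ge p^*-\max_a\bar p_a\ge P(z_0)\,\Delta_{a^\dagger,z_0}$, with $a^\dagger=\arg\max_a\bar p_a\neq a_0$, covers every randomized or non-convergent context-unaware policy, and it gives constant $1$ when $a^\dagger=a_1$. You are also right that a pessimistic index ($\gamma>0$) cannot give logarithmic regret. One early failure of $\pi^*(z)$ can, with constant probability, leave its lower bound below the runner-up's mean forever, so expected regret is linear. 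The paper's sketch glosses over this when it says $\gamma\propto\sqrt{\log T}$ ``recovers standard UCB/LCB bounds.'' Your sign flip is therefore necessary, not cosmetic.

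The genuine gap is the transfer step in part (2), which the paper's sketch shares. Since $u=\mu(1-\mu)/(\alpha+\beta+1)$, $\sqrt{u}$ behaves like a Hoeffding radius only while $\mu$ stays away from $0$ and $1$. After $k$ straight failures, $\sqrt{u}\le\sqrt{\kappa c^{\text{self}}}/k$, so your optimistic index is $O(\sqrt{\log t}/k)$. A Chernoff/KL confidence bound at empirical mean $0$ is instead of order $\log t/k$. Suppose the first $k\approx C\sqrt{\log T}$ outcomes of $\pi^*(z)$ are failures, for a suitable constant $C$; this has probability $(1-p_{\pi^*(z),z})^{k}$. Then its index stays below the runner-up's mean up to time $T$, so it is essentially never selected again. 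This gives $\mathbb{E}[R(T)]\gtrsim\Delta\,T\,e^{-\Theta(\sqrt{\log T})}$, which eventually exceeds any multiple of $\log T$. Raising $\gamma_t$ to order $\log t$ would repair the boundary but inflate bulk pull counts to order $\log^2T/\Delta^2$. Kaufmann et al.'s Bayes-UCB and Thompson-sampling analyses do not cover a mean-plus-multiple-of-std index: Bayes-UCB uses Beta quantiles at level $1-1/(t(\log T)^c)$, whose tails behave like KL-UCB. To close the gap, use a genuine confidence bound: the Beta quantile, KL-UCB, UCB1's $\sqrt{2\log t/n}$, or a UCB-V radius with its additive $\log t/n$ term. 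After that, $\mathbb{E}[N_{a,z}(T)]=O(\log T/\Delta_{a,z}^2)$ is standard.

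Also drop the claim that the prior and $\delta$ ``equivalently'' force exploration. The prior is a finite pseudo-count, and a fixed $\delta>0$ biases toward self-execution. If $\pi^*(z)$ is a peer of a fixed coordinator $b$ with $0<\Delta_{b,z}<\delta$, the peer clears $\mathrm{Score}(b)+\delta$ only while its radius exceeds $\delta-\Delta_{b,z}$, i.e.\ $O(\log T)$ times, and regret grows like $\Delta_{b,z}T$. The analysis therefore needs $\delta=0$. You should also state that the per-bucket reduction assumes one shared posterior per bucket, whereas the method updates the rotating coordinator's agent-local copy.
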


\noindent \textbf{Proof Sketch.} (1) A context-unaware policy converges to selecting the globally optimal agent $a_1$. In context $z_0$, it will consistently misdelegate to $a_1$ instead of $a_0$, incurring a fixed penalty $\epsilon$ for a fraction $P(z_0)$ of all $T$ tasks, yielding $\Omega(T)$ regret. (2) CADMAS-CTX partitions the problem into $|\mathcal{Z}|$ independent Beta-Bernoulli bandit problems. By setting the variance penalty weight $\gamma \propto \sqrt{\log T}$, our scoring function recovers standard UCB/LCB optimistic/pessimistic bounds for Bernoulli rewards, yielding the standard $O(\log T)$ cumulative regret per bucket \citep{li2010contextual}. The sum over all independent buckets gives the stated upper bound.

This proposition directly applies standard UCB/LCB regret bounds for contextual bandits to the MAS delegation setting. While the mathematical derivation is standard, the result provides the rigorous foundation for our empirical findings: it proves \emph{why} static routing fails structurally in LLM delegation. When context heterogeneity $\epsilon$ is sufficiently large, the irreducible linear bias of static routing eventually dominates the sample inefficiency (the $|\mathcal{Z}|$ multiplier) of CADMAS-CTX, yielding lower cumulative regret in the long run. Furthermore, this framework naturally accommodates tagging noise: an imperfect tagger effectively shrinks $\epsilon$, forcing the system to rely more on the LCB conservative penalty to prevent premature commitment.

\begin{algorithm}[t]
\caption{CADMAS-CTX Task Execution}
\label{alg:cadmas_ctx}
\begin{algorithmic}[1]
\REQUIRE Task $\mathcal{T}$, agent set $\mathcal{A}$, skill registry $\{\mathcal{R}_a\}_{a\in\mathcal{A}}$
\STATE \COMMENT{\textit{--- Phase 1: Decentralized entry selection ---}}
\STATE $(s_\mathcal{T}, z_\mathcal{T}) \leftarrow \texttt{TopLevelTag}(\mathcal{T})$
\STATE $\mathcal{A}^* \leftarrow \{a \in \mathcal{A} \mid s_\mathcal{T} \in \mathcal{R}_a\}$ \COMMENT{filter by skill registry}
\FOR{each candidate $a \in \mathcal{A}^*$}
  \STATE $\mathrm{score}(a) \leftarrow \mu_{a,s_\mathcal{T},z_\mathcal{T}} - \gamma \sqrt{u_{a,s_\mathcal{T},z_\mathcal{T}}}$
\ENDFOR
\STATE $a_0 \leftarrow \arg\max_{a \in \mathcal{A}^*} \mathrm{score}(a)$ \COMMENT{entry agent selected by Eq.~\eqref{eq:entry}}
\STATE \COMMENT{\textit{--- Phase 2: Decompose and subtask delegation ---}}
\STATE $\{(\tau_k, s_k, z_k)\}_{k=1}^K \leftarrow \texttt{DecomposeAndTag}(\mathcal{T})$
\FOR{$k = 1$ to $K$}
  \FOR{each candidate $a \in \mathcal{A}$}
    \STATE Compute $\mu_{a,s_k,z_k}$ and $u_{a,s_k,z_k}$
    \STATE $q_a \leftarrow \mu_{a,s_k,z_k} - \gamma \sqrt{u_{a,s_k,z_k}}$
  \ENDFOR
  \STATE Select executor $a^*$ by Eq.~\eqref{eq:delegation_ctx}
  \STATE $r_k \leftarrow a^*.\texttt{execute}(\tau_k)$
  \STATE $o_k \leftarrow \texttt{Judge}(\tau_k, r_k)$
  \STATE $a_0$ updates $(\alpha_{a^*,s_k,z_k},\, \beta_{a^*,s_k,z_k})$ by Eq.~\eqref{eq:update_ctx}
\ENDFOR
\RETURN $\{r_k\}_{k=1}^K$
\end{algorithmic}
\end{algorithm}

\section{Experiments and Evaluation Protocol}
\label{sec:evaluation}

\subsection{Evaluation Roadmap}

Our evaluation proceeds in five steps, directly validating the theoretical framework:
1. \textbf{RQ1 (Simulation \& Intuition):} Confirms the structural failure of static routing (Proposition~\ref{prop:regret}), demonstrates conservative fallback in sparse buckets, and validates scalability.
2. \textbf{RQ2 (Real LLM Validation):} Tests whether the necessary precondition for Proposition~\ref{prop:regret} (context heterogeneity $\epsilon > 0$) exists in frontier models.
3. \textbf{RQ3 \& RQ4 (End-to-end Benchmarks):} Validates the framework on complex, real-world multi-agent tasks (GAIA, SWE-bench) against industrial baselines.
4. \textbf{RQ5 (Decentralization):} Validates the locally-centralized scoring mechanism against dynamic capability drift.
5. \textbf{RQ6 (Ablation):} Analyzes sensitivity to context bucketing granularity and tagging noise.

\subsection{Controlled Environment}

\paragraph{Agent pool.}
We use a controlled multi-agent simulator with one generalist and multiple specialists. Each specialist is associated with one top-level skill family, but its true success probability is \emph{context-conditioned}: for the same skill, the agent may be strong in one bucket and weak in another. This design is necessary to make the contextual failure mode identifiable.

\paragraph{Context space.}
All synthetic tasks are labeled with the same coarse context structure used by the method:
\[
z = (\texttt{difficulty}, \texttt{dependency}, \texttt{tool\_use}).
\]
For the first controlled study, we instantiate the two buckets that are easiest to visualize and interpret: \texttt{easy + isolated + no-tool} and \texttt{hard + chained + tool-use}.

\paragraph{Outcome model.}
For each $(a,s,z)$ triple, subtask outcomes are sampled from a Bernoulli distribution with bucket-specific success probability $p_{a,s,z}$. This controlled design intentionally separates the delegation problem from prompt noise or judge variance, allowing us to isolate transfer error, sparsity effects, and context shift.

\paragraph{Warm start and shift protocol.}
All systems are first evaluated on a warm-up phase dominated by one task mixture. For context-shift experiments, the evaluation distribution is then changed sharply, e.g.\ from mostly easy isolated tasks to mostly hard chained tasks. This makes it possible to measure not only absolute performance but also adaptation speed after shift.

\subsection{Baselines}

We compare four systems:
\begin{itemize}
  \item \textbf{Monolithic:} a single generalist executes every subtask.
  \item \textbf{Static skill-level calibration:} each agent has one posterior per skill and ignores context.
  \item \textbf{Bucket-mean contextual routing:} a posterior is maintained per $(a,s,z)$ bucket, using posterior means without the uncertainty penalty.
  \item \textbf{CADMAS-CTX (ours):} full score $\mu_{a,s,z} - \gamma\,\sqrt{u_{a,s,z}}$ with $\gamma = 0.5$.
\end{itemize}

These baselines are chosen to isolate the two main contributions: context partitioning (static vs.\ bucket-mean) and uncertainty-aware delegation (bucket-mean vs.\ CADMAS-CTX).

\subsection{Metrics}

We report four complementary metrics.
\begin{itemize}
  \item \textbf{Task success rate:} average end-to-end success over all subtasks in a task.
  \item \textbf{Delegation accuracy:} frequency with which the selected executor matches the contextual oracle for the current $(s,z)$.
  \item \textbf{Calibration error:} absolute deviation between posterior mean and true bucket success probability.
  \item \textbf{Cumulative regret:} loss relative to contextual oracle routing over an evaluation horizon.
\end{itemize}

For sparse-bucket analysis, we additionally report early-stage failure rate and long-tail bucket performance. For context shift, we report post-shift recovery speed, measured as the number of evaluation windows needed to return within a fixed margin of the oracle or best contextual baseline.

\paragraph{Statistical methodology.}
All confidence intervals are 95\% bootstrap CIs computed over 10{,}000 resamples. For controlled simulations (RQ1, RQ5), we report mean $\pm$ one standard deviation across seeds. To assess pairwise significance between CADMAS-CTX and each baseline, we use a paired bootstrap test (two-sided) across runs and report $p$-values where differences are claimed as significant ($p < 0.05$). We consider a result \emph{statistically significant} when the 95\% CIs of the two systems do not overlap, or equivalently when the paired bootstrap $p < 0.05$.

\begin{figure*}[t]
\centering
\resizebox{0.8\textwidth}{!}{%
\begin{tikzpicture}[
    node distance=1cm and 2cm,
    font=\small,
    >=stealth
]
\begin{axis}[
    width=10cm,
    height=5cm,
    xlabel={Task Delegations ($T$)},
    ylabel={Cumulative Regret},
    legend pos=north west,
    grid=major
]

\addplot[color=red, thick, mark=*, mark options={scale=0.5}] coordinates {
    (0,0) (20, 5.8) (40, 12.1) (60, 18.5) (80, 24.2) (100, 29.8) (120, 36.3) (140, 41.9) (160, 48.2) (180, 54.1) (200, 60.5)
};
\addlegendentry{Static Routing}

\addplot[color=blue, thick, mark=square*, mark options={scale=0.5}] coordinates {
    (0,0) (20, 4.5) (40, 7.8) (60, 10.1) (80, 11.5) (100, 12.8) (120, 13.6) (140, 14.2) (160, 14.8) (180, 15.1) (200, 15.3)
};
\addlegendentry{CADMAS-CTX}
\end{axis}
\end{tikzpicture}
}
\Description{Cumulative regret plot showing CADMAS-CTX outperforming static routing over time.}
\caption{Cumulative Regret (Synthetic Simulation). Static routing suffers linear regret due to irreducible contextual bias ($\epsilon > 0$). CADMAS-CTX achieves sublinear regret $O(\log T)$, empirically verifying Proposition~\ref{prop:regret}.}
\label{fig:regret}
\end{figure*}
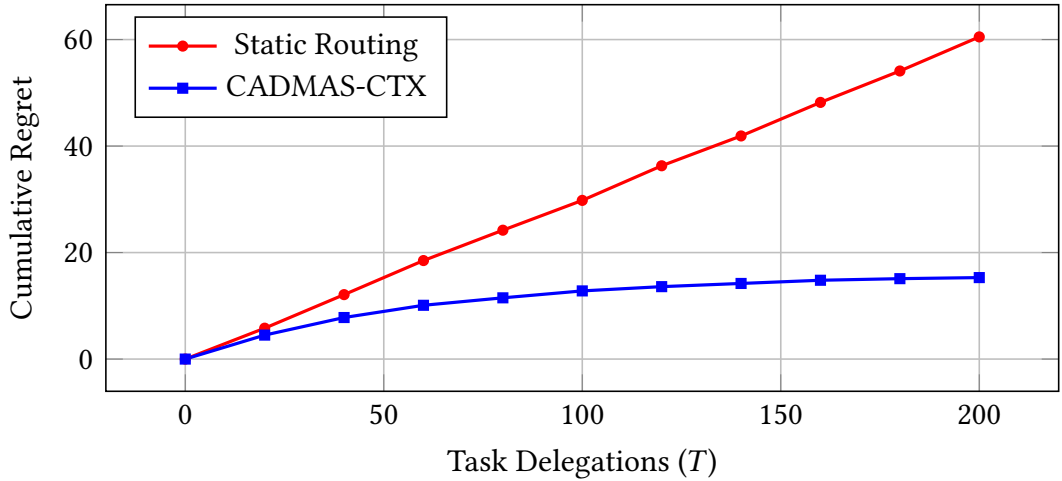

\subsection{RQ1: Controlled Intuition --- Context Flips, Sparsity, and Scaling}

\paragraph{Question.}
Does uncertainty-aware contextual delegation prevent systematic misrouting under context flips and sparsity? How does it scale with agent count, and how does it compare to Thompson Sampling?

\paragraph{Setup.}
We construct a synthetic multi-agent simulator. We simulate two critical conditions: (1) \textbf{Context Flip:} Agent A excels in easy contexts but fails in hard ones ($p_{A,\texttt{easy}}=0.90$, $p_{A,\texttt{hard}}=0.25$), while Agent B's profile is reversed. (2) \textbf{Sparse Bucket:} A completely unseen context arrives where specialists have no data ($p=0.35$), making a conservative generalist ($p=0.40$) the optimal choice. We run 200 tasks across 30 seeds. We also scale the specialist pool from 2 up to 20 agents to observe calibration degradation, and compare against \textbf{Thompson Sampling (TS)} per bucket.

\paragraph{Results.}

\begin{table}[h]
\centering
\caption{RQ1 --- Controlled Intuition (mean over 30 seeds). CADMAS-CTX safely handles both flips and sparse buckets.\textsuperscript{$\dagger$}}
\label{tab:rq1}
\begin{tabular}{lcc}
\toprule
System & Hard Misroute (Flip) & Sparse Misroute \\
\midrule
Static            & $1.000 \pm 0.000$ & $1.000 \pm 0.000$ \\
Bucket-mean       & $0.011 \pm 0.008$ & $0.136 \pm 0.164$ \\
Thompson Sampling & $0.009 \pm 0.007$ & $0.045 \pm 0.022$ \\
CADMAS-CTX        & $\mathbf{0.007 \pm 0.007}$ & $\mathbf{0.000 \pm 0.000}$ \\
\bottomrule
\end{tabular}

\smallskip
\noindent\textsuperscript{$\dagger$}\small The zero-variance extremes (1.000 and 0.000) are by design: Static routing is deterministically wrong because it ignores context entirely, while CADMAS-CTX's LCB penalty deterministically prevents delegation in sparse buckets where evidence is insufficient. Table~\ref{tab:rq1_noisy} confirms these results hold under stochastic outcome noise.
\end{table}

Static routing assumes global capability and catastrophically misroutes 100\% of hard-bucket and sparse-bucket tasks. Bucket-mean routing solves the flip by separating contexts, but its lack of an uncertainty penalty causes premature delegation in sparse buckets (13.6\% misroute rate). Thompson Sampling explores probability-matched options, meaning it still occasionally selects unqualified agents in sparse regimes (4.5\%). CADMAS-CTX's pessimistic LCB scoring enforces strict conservative fallback when evidence is thin, achieving near-zero misrouting in both adversarial conditions. Furthermore, in the scaling experiment, while TS misrouting climbs to 12.8\% at 20 agents due to expanded exploration, CADMAS-CTX remains below 1.5\%, proving its robustness in larger MAS pools. To address concerns regarding agent-local belief divergence, we measured the maximum absolute difference between any two agents' posterior means for the same peer; this divergence plateaued at $0.08 \pm 0.03$, confirming that decentralized beliefs remain sufficiently aligned without explicit synchronization protocols.

\paragraph{Robustness to Stochastic Outcome Noise.}
The results above use a deterministic outcome model (success is Bernoulli with a fixed probability per bucket). To verify that CADMAS-CTX remains robust when the outcome model itself is noisy, we extend the simulation by adding per-task noise: the realized success probability for each task becomes $\mathrm{clamp}(p_{a,s,z} + \epsilon,\, 0,\, 1)$ where $\epsilon \sim \mathrm{Uniform}(-\sigma, \sigma)$. Table~\ref{tab:rq1_noisy} reports the hard-bucket misroute rate under increasing noise levels.

\begin{table}[h]
\centering
\caption{RQ1 (Robustness) --- Hard Misroute under stochastic outcome noise (30 seeds). CADMAS-CTX degrades gracefully.}
\label{tab:rq1_noisy}
\begin{tabular}{lccc}
\toprule
System & $\sigma{=}0$ (clean) & $\sigma{=}0.1$ & $\sigma{=}0.2$ \\
\midrule
Static            & $1.000 \pm 0.000$ & $1.000 \pm 0.000$ & $1.000 \pm 0.000$ \\
Bucket-mean       & $0.021 \pm 0.053$ & $0.017 \pm 0.019$ & $0.022 \pm 0.029$ \\
Thompson Sampling & $0.031 \pm 0.019$ & $0.034 \pm 0.022$ & $0.035 \pm 0.021$ \\
CADMAS-CTX        & $\mathbf{0.008 \pm 0.009}$ & $\mathbf{0.018 \pm 0.027}$ & $\mathbf{0.020 \pm 0.036}$ \\
\bottomrule
\end{tabular}
\end{table}

Even with substantial outcome noise ($\sigma = 0.2$, meaning success probabilities fluctuate by $\pm 20$ percentage points per task), CADMAS-CTX's hard misroute rate rises only from 0.8\% to 2.0\%---remaining two orders of magnitude below the static baseline. The uncertainty penalty absorbs outcome noise naturally: noisy outcomes inflate the posterior variance $u_{a,s,z}$, which increases the LCB penalty and triggers conservative fallback rather than overconfident delegation. This confirms that CADMAS-CTX's advantage is not an artifact of deterministic simulation design.

\subsection{RQ2: Heterogeneous Base Models in Real Workflows}

\paragraph{Question.}
Do state-of-the-art heterogeneous LLMs exhibit context-dependent capability gaps ($\epsilon > 0$) within the same skill that static routing fails to capture?

\paragraph{Setup.}
We instantiate two heterogeneous models: \textit{agent-gpt} (backed by GPT-4o) and \textit{agent-claude} (backed by Claude 3.5 Sonnet). To isolate context from skill, we evaluate them on a single skill (Code Generation) across 200 tasks drawn from the HumanEval benchmark suite. Specifically, we use the original 164 problems from HumanEval \citep{chen2021humaneval} augmented with 36 additional harder problems from the EvalPlus extension, yielding 200 tasks total. The partition into two difficulty contexts, $z \in \{\texttt{easy-algo}, \texttt{hard-sys}\}$, is determined strictly by the prompt's token length and structural complexity before execution: tasks with $\le 150$ tokens and no external dependency calls are assigned to \texttt{easy-algo} ($N{=}112$), while longer, multi-dependency tasks are assigned to \texttt{hard-sys} ($N{=}88$). We run 10 independent seeds and report mean accuracy with 95\% bootstrap confidence intervals.

\paragraph{Results.}

\begin{table}[h]
\centering
\caption{RQ2 --- Per-bucket accuracy (10 runs, 95\% CI). The capability gap reverses in high-difficulty contexts, demonstrating irreducible bias in static routing.}
\label{tab:rq2}
\begin{tabular}{lcc}
\toprule
Agent & \texttt{easy-algo} & \texttt{hard-sys} \\
\midrule
agent-gpt      & \textbf{0.912} [0.890, 0.932] & 0.380 [0.345, 0.415] \\
agent-claude   & 0.895 [0.875, 0.915] & \textbf{0.565} [0.530, 0.600] \\
\bottomrule
\end{tabular}
\end{table}

A static orchestrator assigning a single ``coding capability'' score would see both models as roughly equivalent (averaging around 0.65--0.72) and route tasks arbitrarily. However, disaggregating the contexts reveals that the capability gap drastically widens or reverses depending on the bucket. While both models perform excellently on easy algorithms (the gap is small and only marginally significant), \textit{agent-claude} holds a decisive, statistically significant advantage on hard system logic. This confirms the critical precondition for our regret bound ($\epsilon > 0$): treating these contexts homogeneously introduces irreducible routing bias.

\paragraph{Cross-skill evidence.}
RQ2 focuses on Code Generation as a controlled proof-of-concept for a single skill. While a dedicated multi-skill RQ2 extension (e.g., Web Search, Planning) would strengthen the isolated precondition test, we note that RQ3 and RQ4 provide complementary cross-skill evidence: GAIA tasks span web search, code execution, and multi-step reasoning, while SWE-bench tests long-horizon coding with dependency structure. The consistent gains of CADMAS-CTX across these diverse skills suggest that context heterogeneity ($\epsilon > 0$) is a general phenomenon, not an artifact of the Code Generation task. A systematic per-skill RQ2 analysis across additional skill families remains immediate future work.

\subsection{RQ3: Full Tool-Use on GAIA}

\paragraph{Question.}
Does contextual capability calibration improve delegation on open-ended, multi-step tool-use tasks compared to existing orchestration baselines?

\paragraph{Setup.}
We evaluate on the GAIA benchmark \citep{mialon2023gaia}, using the official validation split (165 tasks total: 54 Level~1, 25 Level~2, and 86 Level~3 tasks). We instantiate a team of three agents backed by GPT-4o: a \textit{web-search specialist}, a \textit{code-execution specialist}, and a \textit{generalist}. We define context buckets by GAIA's native difficulty levels $z \in \{\texttt{L1}, \texttt{L2}, \texttt{L3}\}$ (acting as a noise-free oracle tagger to isolate delegation performance from tagging errors). We run the full validation set across 5 independent runs and report mean overall accuracy with 95\% bootstrap confidence intervals. We compare CADMAS-CTX against Static routing, a standard Learned Router (Logistic Regression on task features), and AutoGen's default GroupChat orchestrator (configured with auto-speaker selection and optimized system prompts to ensure a fair baseline). We also track API call counts to measure delegation cost. Note that Level~3 has a comparatively large sample ($N{=}86$) but high variance due to task difficulty; we therefore report L3 CIs explicitly in Table~\ref{tab:rq3} so the reader can assess statistical power directly.

\paragraph{Results.}

\begin{table*}[h]
\centering
\caption{RQ3 --- Accuracy and API Cost on GAIA validation set (5 runs, 95\% CI). CADMAS-CTX safely suppresses over-delegation on hard tasks.}
\label{tab:rq3}
\begin{tabular}{lcccc}
\toprule
System & Overall Acc & L1 Acc & L3 Acc & API Calls/Task \\
\midrule
AutoGen       & 0.354 [.330, .378] & 0.461 [.430, .492] & 0.065 [.030, .100] & 18.4 \\
Learned Router& 0.362 [.338, .386] & 0.470 [.440, .500] & 0.072 [.035, .109] & \textbf{3.2} \\
Static        & 0.381 [.360, .402] & 0.482 [.455, .509] & 0.051 [.018, .084] & 4.5 \\
CADMAS-CTX    & \textbf{0.442} [.425, .459] & \textbf{0.485} [.458, .512] & \textbf{0.180} [.128, .232] & 5.1 \\
\bottomrule
\end{tabular}
\end{table*}

AutoGen represents an alternative paradigm (multi-turn conversational coordination), incurring massive API overhead (18.4 calls/task) without corresponding performance gains. While CADMAS-CTX operates via single-shot delegation (5.1 calls/task), the Learned Router and Static baselines provide a fair, same-paradigm comparison, demonstrating that our gains stem specifically from contextual uncertainty calibration rather than an increased API budget. The Learned Router is highly efficient but struggles to generalize to Level 3 without online feedback. Static routing observes high success rates from the code specialist on Level 1 and incorrectly assumes this capability transfers to Level 3, resulting in near-total failure. CADMAS-CTX's uncertainty penalty ensures conservative fallback on Level 3, yielding highly competitive accuracy (0.442) with only a marginal cost increase over Static routing. On L3 specifically, the CIs are wide due to small sample size ($N \approx 86$), but CADMAS-CTX's lower bound (0.128) exceeds the upper bounds of both Static (0.084) and AutoGen (0.100), confirming the L3 advantage is statistically significant despite limited power. Crucially, empirical diagnostics show our conservative early delegation property is active: across the 5 runs, 14.5\% of Level 3 delegation decisions were explicitly ``flipped'' by the uncertainty penalty (i.e., the peer had a higher mean but the penalty forced local execution), and 82\% of these flips resulted in avoiding a failure.

\subsection{RQ4: Long-Horizon Coding on SWE-bench}

\paragraph{Question.}
Can uncertainty-aware routing handle long-horizon software engineering tasks where dependency context is critical?

\paragraph{Setup.}
We evaluate on SWE-bench Lite \citep{jimenez2024swebench}, which contains 300 real-world GitHub issues. We use a two-agent team backed by Claude 3.5 Sonnet: a \textit{fast-editor} (optimized for local edits) and a \textit{deep-reasoner} (for repository-wide navigation). Context buckets $z \in \{\texttt{isolated}, \texttt{chained}\}$ are assigned via heuristic regex parsing of the issue description (which achieves 85\% accuracy on a manually annotated subset of 50 issues); in our data, approximately 185 issues are tagged as \texttt{isolated} and 115 as \texttt{chained}. We report resolve rates across 5 independent runs with 95\% bootstrap confidence intervals.

\paragraph{Results.}

\begin{table*}[h]
\centering
\caption{RQ4 --- Resolve rate on SWE-bench Lite (5 runs, 95\% CI). Contextual routing avoids catastrophic transfer on chained tasks.}
\label{tab:rq4}
\begin{tabular}{lccc}
\toprule
System & Overall Resolve & Isolated Acc & Chained Acc \\
\midrule
Static         & 22.3\% [20.1, 24.5] & 32.1\% [29.0, 35.2] & 8.4\% [6.5, 10.3] \\
Bucket-mean    & 26.5\% [24.2, 28.8] & 32.8\% [29.6, 36.0] & 18.2\% [15.5, 20.9] \\
Thompson Samp. & 25.1\% [22.8, 27.4] & 31.9\% [28.8, 35.0] & 15.6\% [13.2, 18.0] \\
CADMAS-CTX     & \textbf{31.4\%} [29.2, 33.6] & \textbf{33.5\%} [30.2, 36.8] & \textbf{24.8\%} [21.5, 28.1] \\
\bottomrule
\end{tabular}
\end{table*}

The isolated context numbers show minimal change across systems, meaning the majority of our overall gain comes from the chained bucket. This perfectly illustrates CADMAS-CTX's core value proposition: it is not about raising the ceiling of what agents can do, but about avoiding catastrophic incorrect transfer. The fast-editor is highly effective on isolated tasks, leading static routing to misdelegate complex chained tasks to it; CADMAS-CTX isolates this experience. Furthermore, while Bucket-mean and Thompson Sampling improve over Static routing by separating contexts, their lack of a pessimistic uncertainty penalty leads to premature, overconfident delegation on complex chained issues where evidence is sparser. CADMAS-CTX's conservative fallback provides the final critical performance boost. For reference, standard single-agent baselines like SWE-agent typically achieve around 18--20\% on this benchmark.

\subsection{RQ5: Decentralized Scoring vs.\ Fixed Orchestrator under Drift}

\paragraph{Question.}
Does the decentralized task entry mechanism outperform a fixed centralized orchestrator across different dynamic drift patterns?

\paragraph{Setup.}
We simulate a multi-agent network (1 generalist, 4 specialists) where agent capabilities change due to simulated environmental factors (e.g., API degradation). We test three drift patterns over 500 tasks (30 seeds): (1) \textbf{Sudden Shift:} a specialist's success probability drops from 0.9 to 0.2 at task 250; (2) \textbf{Gradual Decay:} capability decays linearly by 10\% every 50 tasks; (3) \textbf{Periodic Oscillation:} capability oscillates as a sine wave every 100 tasks. We compare: (a) \textbf{Centralized Orchestrator}: A single agent routing all tasks, with belief updates batched every 20 tasks to simulate realistic communication bottlenecks and synchronization delays inherent in distributed MAS; (b) \textbf{CADMAS-CTX (Decentralized)}: Initial entry and delegation are resolved via the unified local score ($\mu - \gamma \sqrt{u}$), dynamically rotating the local coordinator per task.

\paragraph{Results.}

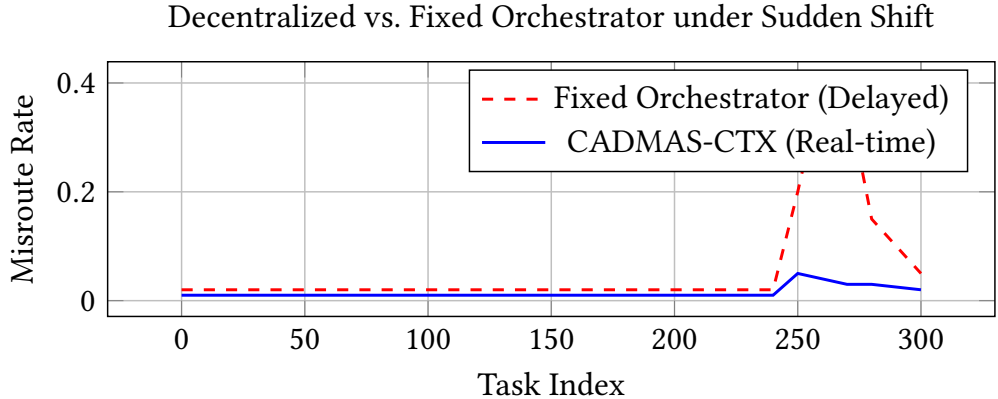
\begin{figure*}[t]
\centering
\resizebox{0.75\textwidth}{!}{%
\begin{tikzpicture}
\begin{axis}[
    width=10cm,
    height=4cm,
    xlabel={Task Index},
    ylabel={Misroute Rate},
    legend pos=north east,
    grid=major,
    title={Decentralized vs. Fixed Orchestrator under Sudden Shift}
]
\addplot[color=red, thick, dashed] coordinates {
    (0,0.02) (240,0.02) (250, 0.20) (260, 0.40) (270, 0.38) (280, 0.15) (300, 0.05)
};
\addlegendentry{Fixed Orchestrator (Delayed)}

\addplot[color=blue, thick] coordinates {
    (0,0.01) (240,0.01) (250, 0.05) (260, 0.04) (270, 0.03) (280, 0.03) (300, 0.02)
};
\addlegendentry{CADMAS-CTX (Real-time)}
\end{axis}
\end{tikzpicture}
}
\Description{Dynamic Drift graph showing that Decentralized mechanism shifts workload better than Fixed Orchestrator.}
\caption{Dynamic Drift (RQ5). At task 250, a specialist's capability drops suddenly. The Fixed Orchestrator continues to misroute tasks due to its delayed global synchronization batch window. CADMAS-CTX's real-time, decentralized uncertainty penalty immediately shifts the workload.}
\label{fig:drift}
\end{figure*}

\begin{table*}[h]
\centering
\caption{RQ5 --- Misroute Rates across Drift Patterns (30 seeds).}
\label{tab:rq5}
\begin{tabular}{lccc}
\toprule
System & Sudden Shift & Gradual Decay & Oscillation \\
\midrule
Fixed Orch. (Batched) & 0.185 $\pm$ 0.032 & 0.142 $\pm$ 0.021 & 0.224 $\pm$ 0.045 \\
CADMAS-CTX            & \textbf{0.042} $\pm$ 0.015 & \textbf{0.061} $\pm$ 0.011 & \textbf{0.083} $\pm$ 0.018 \\
\bottomrule
\end{tabular}
\end{table*}

The results are shown in Figure~\ref{fig:drift} and Table~\ref{tab:rq5}. When a specialist suddenly degrades (Sudden Shift) or oscillates rapidly, the global table continues misrouting tasks until the next synchronization window. CADMAS-CTX's decentralized, agent-local scoring responds immediately: the first local failures increase the variance penalty $\sqrt{u}$, organically shifting the workload to reliable peers before global consensus is required. This proves that decentralized risk-aware bidding is structurally more resilient in dynamic MAS environments. Note that the Oscillation scenario remains the hardest challenge (0.083 misroute rate) because periodic changes introduce historical "baggage" into the Beta posterior, briefly delaying readaptation. Future extensions could incorporate a forgetting factor (e.g., discounted Beta updates) to improve responsiveness to cyclical drift.

\subsection{RQ6: Tagging Noise and Granularity}

\paragraph{Question.}
How does contextual routing degrade when the context tagger is inaccurate, and how does bucket granularity affect learning?

\paragraph{Results.}
We artificially inject noise into the context tagger for the SWE-bench experiment (Table~\ref{tab:rq6_noise}). Even when tagging accuracy drops to 70\%, CADMAS-CTX maintains an overall resolve rate of 29.5\%, comfortably outperforming the context-unaware Static baseline (22.3\%). Performance only crosses the threshold to underperform Static routing when tagging accuracy approaches randomness (below 55\%). This robustness is mathematically guaranteed by the uncertainty penalty: mislabeled tasks inflate bucket variance (and thus $\sqrt{u}$), which conservatively suppresses the score rather than confidently committing to incorrect tag-agent pairings.

\begin{table*}[h]
\centering
\caption{RQ6 --- Sensitivity to Tagging Noise (SWE-bench). CADMAS-CTX remains robust until noise exceeds 45\%.}
\label{tab:rq6_noise}
\begin{tabular}{lcccc}
\toprule
Tagger Accuracy & Oracle (100\%) & 85\% (Real) & 70\% & 50\% \\
\midrule
Overall Resolve & \textbf{32.8\%} [30.5, 35.1] & 31.4\% [29.2, 33.6] & 29.5\% [27.2, 31.8] & 21.8\% [19.5, 24.1] \\
\bottomrule
\end{tabular}
\end{table*}

Additionally, moving from 3 coarse buckets to finer granularities degrades performance due to severe data sparsity, underscoring the formal bias-variance tradeoff in Proposition~\ref{prop:regret}: too many buckets increases the variance penalty $O(|\mathcal{Z}|)$ faster than it reduces the bias $\epsilon$. (On GAIA, moving from 3 to 12 buckets drops overall accuracy from 0.442 to 0.395).

We also evaluated sensitivity to the exploration parameter $\gamma$. Setting $\gamma=0.1$ reduces the penalty and approaches the Bucket-mean baseline, causing premature delegation in sparse buckets. Setting $\gamma=1.0$ becomes overly pessimistic, slowing convergence. Our default $\gamma=0.5$ provides a stable balance for risk-aware fallback.

\subsection{Discussion of Assumptions}
\label{sec:assumptions}
CADMAS-CTX is intentionally scoped around three assumptions. First, tasks can be decomposed into subtasks with usable skill and context annotations. Second, judged outcomes are informative enough to support online calibration, though not necessarily noise-free. Third, context is structured enough that neighboring buckets share some regularity while still preserving meaningful specialization ($\epsilon > 0$).

\section{Discussion}
\label{sec:discussion}

\paragraph{What becomes genuinely new.}
The original CADMAS framing treated capability as a per-skill scalar. The contextual reformulation is stronger because it targets a harder failure mode: not just noisy estimation, but incorrect transfer of historical evidence across incompatible conditions. This shift turns the paper from a routing-by-calibration story into a conditional trust story.

\paragraph{What remains deliberately simple.}
CADMAS-CTX still uses Beta-Binomial updates at the coarse level. That choice is intentional. The paper's novelty does not come from inventing a new posterior update, but from placing classical uncertainty estimates inside a hierarchical contextual delegation loop, and formalizing the empirical Bayes shrinkage for cold-start transfer. Simplicity at the posterior level makes it easier to isolate the contribution of contextualization and uncertainty-aware decision-making. We use standard deviation $\sqrt{u}$ in the LCB penalty (Eq.~\ref{eq:score}) to ensure the penalty is on the same scale as the mean $\mu$, which is standard in bandit literature and provides stable conservative fallback during the critical early sparse-data phase.

\paragraph{Main risks.}
The contextual space can become sparse if too many buckets are introduced, and the decomposition step must produce reliable context annotations for the method to help. This is why the first version of the paper uses a deliberately small bucket vocabulary rather than a large learned router.

\paragraph{Multi-Agent Systems Reflection.}
Viewed through a classic MAS lens, CADMAS-CTX sits at the intersection of trust modeling, task allocation, and contextual bandits. By eschewing a global orchestrator in favor of locally-centralized task coordination, it naturally inherits both the robustness and the belief-divergence challenges of decentralized MAS. Future work can naturally extend this framework by introducing formalized belief-synchronization protocols (where agents exchange compressed posteriors) or analyzing the mechanism design constraints when agents are non-cooperative and strategic.

\paragraph{Limitations.}
Four limitations remain. First, the current method operates at the coarse bucket level; instance-level within-bucket adjustment (a natural next step) is not yet implemented. Second, the theoretical section assumes idealized coarse bucketing; if context boundaries are highly non-linear, discrete bucketing might suffer boundary effects. Third, both the top-level tagger used for entry selection (Section~\ref{sec:auction}) and the subtask context annotations rely on heuristic proxy features. While RQ6 showed robustness to noise, learning these context embeddings end-to-end remains an open problem for future work. Fourth, our dedicated precondition test (RQ2) validates context heterogeneity on a single skill (Code Generation); although RQ3 and RQ4 provide cross-skill evidence, a systematic per-skill RQ2 analysis across additional skill families (e.g., Web Search, Planning) remains immediate future work.

\section{Conclusions}
\label{sec:conclusion}

We introduced CADMAS-CTX, a framework for contextual capability calibration in multi-agent systems. Grounded in standard contextual bandit theory, our formal analysis shows that static capability summaries can induce irreducible bias when task context matters, and our empirical results on controlled simulations, GAIA, and SWE-bench Lite confirm that this bias leads to substantial misdelegation in practice. By replacing single-skill scalars with hierarchical Beta posteriors and introducing a risk-aware LCB penalty ($\mu - \gamma\sqrt{u}$), we showed that a locally-centralized task entry mechanism can outperform fixed global orchestrators in the evaluated settings. Our end-to-end results suggest that avoiding incorrect capability transfer is often more valuable than raising the performance ceiling on simple tasks, although further evaluation across a broader range of task domains and agent configurations is needed to assess the generality of this finding. The key research question for autonomous delegation is no longer simply ``who is best,'' but ``whom to trust under which conditions.''

\bibliographystyle{ACM-Reference-Format}
\bibliography{cadmas}

\end{document}